\documentclass[sigplan,10pt,nonacm]{acmart}

\renewcommand\footnotetextcopyrightpermission[1]{}
\usepackage{booktabs}
\usepackage{array}
\usepackage{amsmath}
\usepackage{amsfonts}  %
\usepackage{microtype}

\usepackage{amsthm}

\newtheorem*{definition*}{Definition}
\newtheorem{claim}{Claim}

\usepackage{enumitem}
\setlist{leftmargin=1.4em,topsep=2pt,itemsep=1pt,parsep=0pt}

\newenvironment{myitemize2}%
{\begin{list}{\labelitemi}{\itemsep1pt \topsep2pt \parsep0.00in
          \partopsep=0pt \leftmargin1.2em}}%
          {\end{list}}

\newcommand{\heading}[1]{\vspace{1ex}\noindent\textbf{#1}}
 
\begin{document}

\title{Defining AI-Native Systems:\\ Autonomy as Revision Authority}

\author{Cheng Tan}
\affiliation{%
  \institution{Northeastern University}
  \city{}
  \state{}
  \country{}
}
\email{}

\renewcommand{\shortauthors}{Cheng Tan}

\begin{abstract}
AI has begun to write systems code: agents now synthesize, verify, and deploy
system components.
Despite this shift, ``AI-native'' remains a marketing term with no precise
technical definition.
This paper gives it one.
We define AI-nativeness along a single axis---\emph{authority over the
system's own decisions}---rather than by the capability of the underlying AI
models.
Building on a decision-level model of a system, we distinguish
\emph{occupancy} (who executes a decision) from
\emph{revision authority} (who may change it),
organize revision authority into a ladder---self-tuning, self-rewriting,
self-architecting---and
define a system as AI-native when an AI autonomously rewrites the system's own
implementations.
The definition further requires an escalation detector, a verification
procedure, and a verified fallback, while leaving purpose and correctness
human-owned.
\end{abstract}
 
\keywords{AI-native systems, ML for systems, autonomy, coding agents, self-adapting systems}

\maketitle

\section{Introduction}

Artificial intelligence is starting to write systems software. Beyond the
now-familiar practice of replacing a heuristic with a learned model, agentic
systems have begun to \emph{author} systems code: recent frameworks discover
novel scheduling and load-balancing algorithms that beat expert baselines, and
coding agents synthesize, benchmark, and verify implementations against real
workloads~\cite{adrs,alphaevolve,dwivedula2025vulcan}.
The question a systems designer faces has shifted accordingly---from
``can a model beat this heuristic on a benchmark?''
to ``how much of the system do we let an AI hold and change, and how do we stay
safe while it does?''

The moment is genuinely new. Machine learning for systems already
ships---learned indexes, caches, allocators, and schedulers run in
production---yet only as narrow, hand-built, hand-maintained point solutions,
because a model that \emph{resides} on the hot path pays for it in inference
overhead, workload drift, weak tail guarantees, and expensive bookkeeping
(\S\ref{sec:landscape}). What changed is the mode of use: an agent that
\emph{authors} code works off the critical path, at development time, and pays
none of those costs. For the first time it is feasible to let an AI continuously
revise a system's own \emph{implementation}---not merely retune its knobs---under
human-set goals.

That capability needs a name for the system property it creates, and
``AI-native'' is the label the field already reaches for. But the label is a
marketing term---attached to databases, operating systems, clouds, and developer
tools, meaning something different each time---and what it lacks is a rigorous,
technical statement of what property a system must have to earn it. This paper
offers one.

Our definition rests on a single axis: \emph{authority over the system's own
decisions}---who is permitted to change what the system does, at which level of
its design, how often, and under what guardrails. This is the property that
determines whether a system can \emph{evolve itself}, safely, and we argue it is
what ``AI-native'' should name. It is a claim about the \emph{system}, not the
AI: two recent efforts grade instead how much intelligence a problem
demands~\cite{maas} or how capable an AI is at systems research~\cite{feng},
whereas we grade how much authority a system delegates to an AI and how that
authority is bounded. The questions are orthogonal, and \S\ref{sec:def} makes the
relationship explicit.

We develop the definition in three steps.
\begin{myitemize2}

\item First, a \emph{decision-level model}
(\S\ref{sec:levels}) stratifies a system's decisions by binding
time---design, implementation, policy, runtime---and separates \emph{occupancy}
(who executes a decision) from \emph{revision authority} (who may change it), the
axis on which autonomy actually lives.

\item Second, a \emph{revision-authority ladder}
(\S\ref{sec:shift}) grades systems by how high that authority
reaches: self-tuning, self-rewriting, and self-architecting.

\item Third, the ladder framework yields the \emph{definition} (\S\ref{sec:def}): a system is AI-native iff an
AI autonomously rewrites its implementations under an escalation detector, holds
authority over how deciders are allocated, runs every revision through a
verification procedure while retaining a fallback, and leaves purpose and
correctness human-owned---a grade that must be \emph{evidenced} by an auditable
certificate, not asserted.
\end{myitemize2}

\noindent
We then draw an analogy between our definition and a precedent:
the SAE driving-automation levels (\S\ref{sec:analogy}), which pin
purpose and own failure in the same places we do, and close with the \emph{open
problems} (\S\ref{sec:disc}) that separate the definition from its practical
realization.

\section{AI for Systems Today, and Why Now}
\label{sec:landscape}

Machine learning for systems (ML4Sys) works, and it ships. Learned index
structures~\cite{learnedindex},
learning-based memory allocation in production server workloads~\cite{learnedmalloc},
admission optimization for datacenter flash caches~\cite{cachesack},
learned eviction for a large content-delivery network~\cite{halp},
learned virtual-machine {NUMA} placement~\cite{vmnuma}, and
lifetime-prediction-driven VM scheduling have all reached real deployments.

These are genuine successes. But they are also
\emph{narrow}: each is an expensive, component-scoped effort, hand-built and
hand-maintained by a team that understood both the model and the subsystem. ML
for systems is a collection of point solutions, not a pervasive, self-sustaining
property of how systems are built and evolved. That gap---between real
component wins and systemic adoption---is what motivates this paper.

\subsection{Why classic ML4Sys stayed narrow}
\label{sec:whynarrow}

The reason is not that the models were bad.
It is that a \emph{resident} learned component---for example, a neural network that executes on the live
path~\cite{nn4sysbench}---carries operational costs a small heuristic does not, and those costs
keep every deployment expensive and supervised.
For classic ML4Sys deployment,
five challenges remain, and each is a reason an
ML4Sys model stays expensive to own.

\heading{Runtime overhead.} Systems components run under punishing latency
budgets: a cache replacement resolves in nanoseconds, an allocator or scheduler
fires easily thousands of times per second. Neural inference---even a small
MLP---frequently costs more than the operation it optimizes, and the cost is not
only the forward pass but feature collection, preprocessing, and acting on the
prediction. Charged on the critical path at decision frequency, a learned
component is often a net \emph{slowdown} even when its decisions are
qualitatively better.

\heading{Distribution shift.} Workloads are nonstationary---diurnal cycles, load
spikes, new query shapes, data growth, tenant churn. A model trained on one
regime degrades off it, and learned indexes are the canonical case: excellent on
static read-only data, they force expensive retraining or lose performance under
insertions. Research benchmarks on fixed traces hide this and inflate apparent
gains. In production a mediocre-but-robust heuristic often beats a policy that is
excellent on its training distribution and unpredictable off it, and the cost of
keeping the model aligned---detecting drift, retraining, revalidating,
redeploying---is a continuous tax heuristics never incur.

\heading{Tail behavior.} Engineers care about p99/p999 and worst-case bounds,
not averages, and learned approaches are doubly mismatched. Tail events are rare,
so they are underrepresented in any collected dataset: the model has little
signal about exactly the regime that matters. And the objective works against
the tail: minimizing an aggregate loss is dominated by the common case, so the
optimizer will trade away tail behavior for a small average gain. Classical
heuristics come with understood worst-case bounds; learned components usually
offer good average-case behavior with no guarantee where a single bad decision
can cause a latency spike or an SLA violation. Recovering guarantees for a
resident network is possible but demands its own apparatus---verifying the
network against systems specifications, and generating those specifications in
the first place~\cite{ouroboros,nn4sysbench}---which adds to the operational costs.

\heading{Maintainability.} A 50-line heuristic becomes an ML pipeline: training
infrastructure, feature stores, model versioning, drift monitoring, retraining
triggers, and a serving path---each a new dependency and failure mode. Debugging
inverts too: a misbehaving heuristic can be read and patched; a misbehaving
network offers no comparably direct account of why it evicted a line. Systems
software has long lifespans and conservative deployment cultures, so operators
are reasonably reluctant to adopt an opaque component whose bookkeeping never ends and
whose marginal gains rarely justify its lifetime maintenance costs.

\heading{Expertise.} Applying ML to a component requires deep skill in two
domains at once---expertise in machine learning and systems---and the combination is rare. It bites
hardest in the data, labels, and reward: knowing which workloads matter, what the
``right'' decision is (often a systems question, sometimes unavailable because
only the counterfactual would reveal it), and how to encode value including the
tail. Get any wrong and the model looks good on paper and fails, or is dangerous,
in production.

None of this says ML for systems \emph{failed}. It says the resident-model
form---occupancy of the hot path---does not generalize into a way of
\emph{building} systems. Every barrier above is a cost of making a model
\emph{execute} on the live path; none is intrinsic to letting a model
\emph{shape} the system off it.

\subsection{What changed today}
\label{sec:whatchanged}

The mode of use changed. Agentic systems now \emph{author} systems code rather
than reside inside it: they discover new algorithms, evolve implementations
directly, and vet candidates through benchmarking and
verification~\cite{adrs,alphaevolve}. The AI moves from \emph{occupying} a
decision---a model on the hot path, deciding each event---to \emph{authoring the
code} that decides, in a control plane that runs far off the critical path. Maas
et al.\ name the resulting shape: a slow control plane where models reason about
long-tail policy and generate code, and a fast data plane where the generated
code executes with minimal latency~\cite{maas}. Early explorations already ask
whether an LLM can stand in for hand-tuned system policies
outright~\cite{zhao2025policies}. This dissolves the five barriers
at a stroke, because they were all costs of \emph{residency}: authored code pays
no inference overhead on the hot path, is re-authorable when the workload shifts,
can be verified for tail behavior before it ships, is as maintainable as the code
a human would have written, and shifts the expertise burden onto the agent.

This is why the moment matters. For the first time it is feasible to let an AI
continuously revise a system's own \emph{implementation}---not just retune its
knobs---under human-set goals. That capability is as consequential as it is
hazardous, and the field has no rigorous account of the system property it
creates, nor of the guardrails that must bound it.
Providing---or at least sketching---such an account is the task of the rest of this paper.
\section{A Decision-Level View of a System}
\label{sec:levels}

To say precisely what authority an AI holds, we need a vocabulary for the
decisions a system makes. This section builds one. We model a system as a
collection of decision points related by a dominance order, stratify that order
into levels by binding time, and then argue
that the resulting structure is \emph{all there is}: every decision that
affects system behavior lands at one level or on one of two orthogonal axes. The
definition of \S\ref{sec:def} then has somewhere to stand.

\subsection{Decision points, deciders, dominance}
\label{sec:prelim}

\heading{Workloads.} Let $W$ be the space of workloads. A workload $w \in W$ is
a stochastic process generating the inputs, requests, and environmental
conditions the system encounters. Workloads are nonstationary: the active
workload at time $t$ is $w_t$, and the gap between $w_t$ and whatever the system
was last tuned for is the pressure every adaptation mechanism exists to relieve.

\heading{Decision points.} A \emph{decision point} is a triple
$d = (X, I, J)$: an option space $X$ of admissible choices, the information $I$
available when the choice is made, and an evaluation criterion
$J : X \times W \to \mathbb{R}$ (latency, throughput, tail-bound satisfaction).

\heading{Decision and decider.}
A \emph{decision} at a decision point $d$ selects $x \in d.X$ given information $i \in d.I$.
A \emph{decider} for $d$ is any map $\delta : I \to X$; deciders range over
humans, static rules, heuristics, classic ML models, neural networks, and LLM agents.
Two questions must be kept apart:
\emph{who decides} (the decider $\delta$ occupying $d$)
and \emph{who may change the decider} (the authority to revise).
Conflating them is the error the rest of the paper is built to avoid.

\heading{Dominance.} Decision point $d$ \emph{dominates} $d'$, written
$d \succ d'$, if the outcome chosen at $d$ determines the option space,
information structure, or evaluation criterion of $d'$. Choosing at $d$ does not
\emph{answer} $d'$; it defines what $d'$ even is. Choosing a storage
architecture fixes which algorithms are admissible; an algorithm fixes which
parameters exist; a parameter setting fixes which runtime actions are possible.
The levels are the strata of this order.

\heading{Binding time.} Each decision has
a \emph{binding time}; that is, the point in
the system's life cycle at which its outcome becomes fixed.
Reading the dominance order by binding time is what makes the level classification comprehensive
(\S\ref{sec:closure}).

\subsection{The four levels}

Stratifying $\succ$ by binding time yields four levels (L1--L4) inside the artifact,
generated by one recursion spelled out in \S\ref{sec:genrule}.

\begin{myitemize2}
\item \textbf{L1: Design.} An L1 decision selects the decision architecture
  itself---the system's design: the set of top-level components,
  the interfaces between them,
  the objective (e.g., \emph{which} statistics matter---mean vs.\ p999),
  and the correctness invariants and worst-case constraints.
  Fixing the design $D$ thereby determines $P_D$, the set of implementations
  admissible at all.
  Examples: whether there is a buffer pool with an eviction decision at all;
  the storage/compute split; the SLA; which invariants may never be violated.

\item \textbf{L2: Implementation.} An L2 decision selects a program
  $p \in P_D$ behind a fixed interface, thereby fixing both the policy
  family $\Theta_p$ that L3 can express
  and the representation in which those L3 objects live.
  Examples: a hash index vs.\ a B-tree
  behind one lookup interface; which algorithm implements the scheduler;
  replacing a formula with a lookup table; whether the policy family is
  ``threshold rules'' or ``a small decision tree.''

\item \textbf{L3: Policy.} An L3 decision selects a policy
  $\theta \in \Theta_p$ from the family the current implementation exposes,
        binding the map $\pi_\theta : S \times I \to A$ (with $S$ the system state,
        $I$ the observed information, and $A$ the action space); it generates every
  subsequent L4 decision.
  Examples: LRU
  vs.\ LFU behind a flag; a cost model's constants; an admission threshold;
  timeout and retry parameters; and the trained weights of a runtime ML
  model.

\item \textbf{L4: Runtime.} An L4 decision is a single application of the current
  policy: given runtime state $s$ and observation $i$, select action
  $a \in A_s$, consumed the moment it is made.
  Examples: evict \emph{this} line; schedule \emph{this} task; take
  \emph{this} branch; choose \emph{this} join order.

\end{myitemize2}

\noindent
Above L1 sits \textbf{L0}, the \emph{purpose}---what the system is for, which
workloads matter, what ``correct'' means. L0 dominates L1 in the sense of
$\succ$ and, we will argue, must stay human-owned.

\heading{Runtime ML, classified.} A learned component in the classical
ML-for-systems sense decomposes exactly onto these levels: its weights are an L3
object selected by a training procedure (an L3 decider), and its forward pass is
an L4 decider whose cost is charged per triggering event.
The paradigm thus pays L4-frequency costs to exploit L3-frequency
learning---which is precisely the runtime-overhead barrier of
\S\ref{sec:landscape}.

\subsection{The decision-level hierarchy}
\label{sec:genrule}

The four levels are not an arbitrary list; they are generated by one recursion:

\begin{quote}
\emph{A level-$k$ decision selects an element of a space that the
level-$(k{-}1)$ decision produced.}
\end{quote}

$D$ (L1) defines $P_D$; $p \in P_D$ (L2) defines $\Theta_p$;
$\theta \in \Theta_p$ (L3) defines $\pi_\theta$;
$\pi_\theta$ (L4) defines the action. Equivalently, each level is
a stratum of $\succ$: L1 $\succ$ L2 $\succ$ L3 $\succ$ L4, and within a stratum
no decision dominates another.

\begin{table}[t]
\centering
\footnotesize
\setlength{\tabcolsep}{4pt}
\renewcommand{\arraystretch}{1.2}
    \begin{tabular}{@{}l@{\quad}c@{\quad}c@{\quad}c@{\quad}c@{}}
\toprule
 & \textbf{L1} & \textbf{L2} & \textbf{L3} & \textbf{L4} \\
 & \emph{design} & \emph{impl.} & \emph{policy} & \emph{runtime} \\
\midrule
Binding time         & design     & compile & config   & execution \\
\midrule
Decision frequency   & rare       & low    & medium   & extreme \\
\midrule
Per-decision stakes  & extreme    & high   & medium   & tiny \\
\midrule
Latency to decide    & months     & days   & ms--hrs  & ns--ms \\
\midrule
Revision cost        & interfaces & code   & params   & -- \\
\midrule
Decider today   & human      & human + agent  & tuner & code \\
\bottomrule
\end{tabular}
\caption{The four in-artifact levels with two opposed gradients: per-decision
stakes grow upward exactly as frequency grows downward. This opposition is the
formal content of the runtime-overhead barrier: expensive deciders are
affordable only where decision frequency is low, and L4's frequency excludes
expensive deciders.}
\label{tab:levels}
\end{table}

Table~\ref{tab:levels} lays out the binding time, frequency, and stakes of each
level, and it explains why one cannot simply ``put the smart thing on the hot
path'': the level whose decisions matter least per event is the one that fires a
billion times a second, and it admits only the cheapest decider. Intelligence
has to act where it is affordable---at L2 and L3---and let compiled code carry
L4.

\subsection{Adaptation, formally}
\label{sec:adaptation}

\begin{definition*}[Adaptation at level $k$]
An adaptation at level $k$ is a revision of the level-$k$ selection, triggered
by information observed after the original selection, holding all levels above
$k$ fixed:
\[
x_k \to x_k' \quad\text{with } x_{<k}\text{ fixed, triggered by }\Delta w.
\]
\end{definition*}

Under this definition, online tuning and retraining are L3 adaptation; an agent
rewriting the eviction routine against fresh traces is L2 adaptation;
re-architecting interfaces is L1 adaptation. L4 admits \emph{no} adaptation:
an L4 decision is consumed at its binding time, so only its generators (L3 and
above) can adapt. The \emph{adaptive range} of a system is the set of levels at
which it can adapt without human intervention. Classical systems have adaptive
range $\{$L3$\}$ at most, and typically only within a small hand-designed
$\Theta$ (i.e., a policy set at L3). Everything the next two sections grade is a claim about adaptive range
and about \emph{who} triggers movement within it.

\subsection{Are the four levels comprehensive?}
\label{sec:closure}

A stratification is only worth building a definition on if it is complete.
We claim the levels are
comprehensive in a precise, bounded sense, and we are equally precise about
where the boundary lies, because the two places the vertical hierarchy
\emph{cannot} see are exactly where the AI-native definition does its work.

\begin{claim}[Closure within artifact scope]
Fix a representation convention $R$ (what counts as ``code'' vs.\ ``data''). Then
every decision whose outcome affects the system's behavior on some workload
belongs to exactly one of L1--L4.
\end{claim}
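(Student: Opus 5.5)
The plan is to argue by binding time, exploiting the fact that $R$ fixes once and for all which artifacts count as code and which as data. Take any decision point $d=(X,I,J)$ whose outcome affects behavior on some workload $w$. Its outcome becomes fixed at some moment in the artifact's life cycle. Relative to $R$, I would partition that life cycle into four disjoint phases: (i) before any program text exists, when components, interfaces, objective and invariants are fixed; (ii) when code behind a fixed interface is fixed; (iii) when data consumed by that code is fixed (configuration, constants, weights); (iv) during execution. Since $R$ makes the code/data boundary sharp and execution is by definition what follows binding of code and data, the four phases are disjoint and exhaustive. The map from decisions to levels is then $d\mapsto$ the phase containing its binding time. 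That gives \emph{at least one} level, and since binding time is a single point, \emph{at most one}.

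Next I would check that this phase map agrees with the generative recursion of \S\ref{sec:genrule}, so the assigned level is the level in the paper's sense and not merely a timestamp. I would argue by induction down the dominance order $\succ$. A decision bound in phase (i) fixes the option space of code choices, so it selects $D$ and hence $P_D$, which makes it L1. A phase-(ii) decision selects an element of $P_D$, because anything outside $P_D$ would violate the fixed interface or invariants and so would really be a phase-(i) revision. A phase-(iii) decision selects data interpreted by the fixed program $p$, so it lies in $\Theta_p$. A phase-(iv) decision is an application of $\pi_\theta$, since once code and data are fixed the only remaining freedom is state and observation. I would also need that no decision dominating another sits in the same stratum. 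Dominance means $d$ fixes $d'$'s option space, so $d$ must bind strictly earlier; within a phase, one would check that such chains collapse into one composite selection.

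The scope hypothesis (``affects the system's behavior'') is used to exclude decisions outside the artifact: L0 purpose and who-holds-authority decisions. Those do not bind inside the artifact's life cycle; they bind the evaluation criterion or the decider allocation. This is where the paper's two orthogonal axes live.

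I expect the main obstacle to be the boundary cases where phases blur. Examples are self-modifying code, JIT-generated code, a lookup table that replaces a formula, and weights of a learned model that effectively are a program. My first attempt is to make the dependence on $R$ do all the work. Once $R$ is fixed, a generated artifact is either code, and so its selection is L2 (possibly re-bound late, which is L2 adaptation in the sense of \S\ref{sec:adaptation}), or data, and so it is L3. The claim is then explicitly relative to $R$, and changing $R$ can shift a decision between L2 and L3 but never make it unclassifiable. The ``exactly one'' part is fragile when one physical choice is re-bound at several times. I would handle this by treating each re-binding as a separate decision, namely an adaptation at the same level, rather than one decision spanning levels.
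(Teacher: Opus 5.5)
\textbf{There is a gap in your first step.} You derive exhaustiveness and uniqueness from two things: a partition of the life cycle into four disjoint phases, and the fact that ``binding time is a single point.'' Neither reading of the phases supports this.

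Read temporally, the phases are not disjoint intervals. In this paper's setting, design and code are revised after deployment and while the system runs: re-architecting interfaces is L1 adaptation, and an agent re-deriving the eviction routine from fresh traces is L2 adaptation. So your map misassigns decisions. ``Add an observable feature to eviction state,'' which the paper classifies as L1, binds long after program text exists and so never lands in your phase (i).

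Read instead as the kind of artifact fixed, ``a single point'' buys nothing. A single decision can also fix several kinds at once: choosing a new $p$ entails choosing an initial $\theta\in\Theta_p$. Note also that $R$ sharpens only the (ii)/(iii) boundary; nothing in your setup sharpens (i)/(ii).

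Your later patches do not close this. Saying a code change outside $P_D$ is ``really a phase-(i) revision,'' or that generated code is L2 because $R$ calls it code, classifies by \emph{what revising the decision requires}, not by when it binds. These patches also conflict with the map as you defined it: code generated during execution binds in phase (iv), so the map says L4 while the patch says L2. You therefore never exhibit a single well-defined rule, and uniqueness is not established.

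The repair is exactly the paper's argument: promote the patch criteria to the definition, as nested dichotomies.
\begin{itemize}
\item Does the decision bind at execution? Then L4.
\item If not, is it revisable without changing code under $R$? Then L3.
\item If not, does the code change respect the current interfaces, objective, and invariants (L2) or not (L1)?
\end{itemize}
Each question is yes/no, so the four cases are exhaustive and mutually exclusive by construction. A composite such as ``new $p$ with initial $\theta$'' is settled because it requires a code change. With this rule, your induction down $\succ$ is not needed for the claim, and its step ``chains collapse into one composite selection'' is unsupported anyway.

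Finally, the hypothesis ``affects the system's behavior'' does not exclude L0, $\alpha$, or $\rho$. The paper says explicitly that $\alpha$ and $\rho$ decisions affect behavior from outside the artifact. They are excluded only by restricting the claim to in-artifact decisions, the ``within artifact scope'' of its title. The paper states this as a limitation (caveats (b) and (c)) rather than deriving it, and you should state it as an assumption too.
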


\begin{proof}[Argument]
Classify by binding time and revision cost. Either the decision binds at
execution---L4---or it binds earlier. If earlier, either it is revisable without
changing code under $R$---L3---or it requires changing code. If it requires
changing code, either the change respects the current interfaces, objective, and
invariants---L2---or it does not---L1. The four cases are exhaustive and
mutually exclusive by construction.
\end{proof}

The claim is real, but its scope is exactly the phrase ``within artifact
scope.'' Three caveats bound it, and each names something the linear hierarchy
cannot represent, and each turns out to be essential for the definition of AI-native systems.

\heading{(a) The L2/L3 boundary is conventional, not intrinsic.} Whether a
quantity is a ``parameter'' or ``code'' is itself a design choice. A threshold in
a config file is L3; the same threshold as a compiled-in constant is L2; a
policy encoded as an interpreted rule table is L3 under the interpreter but L2 if compiled. %
Indeed, the level structure reflects
a continuous binding-time spectrum, and the partition depends on
$R$, which is decided at L1.

This is not a defect but a design choice: \emph{a
designer can move a decision between levels}---widening $\Theta_p$ pushes an L2
decision down to L3, where adaptation is cheaper. Much of what ``AI-native
architecture'' should mean is choosing $R$ so the decisions that drift fastest
live at the levels where revising them costs least.

\heading{(b) The hierarchy is missing its top: L0.} L1 selects system design,
but choosing \emph{what the system is for}---which workloads matter, what the SLA
should be, what ``correct'' means---dominates L1 in the sense of $\succ$ and is a
distinct decision. Call it L0. It is cleanly separable: two teams can share L0
(same purpose, same SLA) and diverge at L1 (different architectures). Folding L0
into L1 hides precisely the level every capability account leaves human.
Making L0 explicit is what lets a definition say \emph{where} autonomy stops.

\heading{(c) Two whole axes escape the vertical hierarchy.}
The levels classify decisions \emph{inside} the artifact.
Two families of decisions affect behavior from \emph{outside} it:
\begin{itemize}
\item \textbf{The allocation axis $\alpha$.} For each decision point $d$, some
  process selects the decider $\delta_d$---who or what decides here, with what
  fallback, revised when. These are decisions \emph{about the decision
  structure}, not about behavior, and they attach orthogonally to every level
  (one asks ``who decides?'' of an L3 tuning as of an L4 eviction). The
  allocation map $\alpha$, which assigns a decider to each decision point, has no
  home on a purely vertical axis.
\item \textbf{The verification axis $\rho$.} Whether a candidate revision is
  valid, when it deploys, how it rolls back, what is monitored, when drift
  triggers regeneration. These bind \emph{between} versions of the artifact
  rather than within one. Collapsing them into L2 loses the operational
  distinction that matters most: an L2 decision \emph{produces} a candidate; a
  $\rho$ decision \emph{admits} it.
\end{itemize}

\heading{Verdict.} The four levels are comprehensive as a stratification of
in-artifact decisions by binding time, relative to a fixed $R$.
However, they are \emph{not} a complete map of the decision
landscape an AI-native definition must range over. The complete picture is a
vertical hierarchy of five levels crossed with two orthogonal axes:
\[
\underbrace{\text{L0}}_{\text{purpose}} \;\succ\;
\underbrace{\text{L1} \succ \text{L2} \succ \text{L3} \succ \text{L4}}_{\text{artifact axis}}
\quad\times\quad
\underbrace{\alpha}_{\text{allocation}}
\quad\times\quad
\underbrace{\rho}_{\text{verification}}.
\]
The definition that follows (\S\ref{sec:def}) factors cleanly along exactly these
three directions---adaptive range on the vertical axis, authority over $\alpha$,
constraint by $\rho$, with the human residue pinned at L0.

\subsection{Occupancy versus revision authority}
\label{sec:occupancy}

The pivotal move, which the allocation axis makes available, is to separate two
questions the field routinely runs together.
\emph{Occupancy:} who executes the selection at a decision point---compiled code, a lookup table, or a neural
policy's forward pass?
\emph{Revision authority:} who is allowed to \emph{change} that decision when the workload demands it?
A learned eviction policy \emph{occupies} L4---its forward pass is the decider---while holding
revision authority over nothing: it cannot re-tune its own weights, rewrite its
serving code, or alter its design.
A human installs it and a human maintains it.
\emph{Occupancy without revision authority is not autonomy; it is an expensive resident.}
Autonomy is a claim about revision authority, and that is the axis the
next two sections climb.
Table~\ref{tab:worked-levels} presents several canonical examples that illustrate these differences.

\begin{table*}[t]
\centering
\small
\setlength{\tabcolsep}{6pt}
\renewcommand{\arraystretch}{1.15}
\begin{tabular}{@{}p{0.42\textwidth}cp{0.42\textwidth}@{}}
\toprule
\textbf{Decision} & \textbf{Level} & \textbf{Notes} \\
\midrule
Evict this cache line now & L4 & consumed per event \\
Cache policy $=$ LRU (runtime flag) & L3 & parameter under $R$ \\
Cache policy $=$ LRU (only policy compiled in) & L2 & needs a code change \\
Neural eviction model's weights & L3 & training is an L3 decider \\
\quad its forward pass & L4 & inference charged per event \\
B-tree $\to$ learned index, same API & L2 & interfaces preserved \\
Add an observable feature to eviction state & L1 & changes $S$, hence $I$ below \\
``p999 $<$ 2\,ms is the goal'' & L0 & dominates $J$ at L1 \\
``Agent tunes; humans approve diffs'' & $\alpha$ & attaches to L3/L2 \\
``Every policy must pass the trace suite'' & $\rho$ & admission, not behavior \\
\bottomrule
\end{tabular}
\caption{Worked classifications. The last two rows fall on the orthogonal axes:
allocation $\alpha$ and verification procedure $\rho$.}
\label{tab:worked-levels}
\end{table*}

\section{The Revision-Authority Ladder}
\label{sec:shift}

We now grade systems by \emph{revision authority}: the highest level at which an
AI process can perform adaptation autonomously.

\begin{definition*}[Autonomy ceiling]
The \emph{autonomy ceiling} of a system is the highest level Lk in the
dominance order $\succ$ at which an AI process can perform adaptation
autonomously---i.e., at which Lk lies in the adaptive range with an AI as the
adapting decider. Grade Sk names a system with ceiling Lk.
\end{definition*}

Two structural facts fix the shape of the ladder. First, because a level-$k$
selection re-parameterizes every level below it (the generating rule of
\S\ref{sec:genrule}), revision authority is \emph{cumulative downward}: authority
at Lk without authority at the level just below it is incoherent, since selecting
a new $p$ entails selecting an initial $\theta \in \Theta_p$, and selecting a new
$D$ entails selecting an initial $p \in P_D$. A grade-$k$ system therefore has
adaptive range covering L3 through Lk under AI authority. Second, L4 is
excluded from the ladder by the formalism itself: an L4 decision is consumed at
binding time and admits no adaptation (\S\ref{sec:adaptation}). ``Autonomy at
L4'' is a category error, so the ladder has exactly three grades, each indexed by
the level it governs.

\heading{Off-ladder: model-residency.} A system is \emph{Model-Resident} iff
some runtime decision point is occupied by a learned model:
$\alpha(d) = \pi_\theta$ with $\theta$ selected by training. This is an
\emph{occupancy} property, not a grade: the weights are an L3 object and the
forward pass an L4 decider, and a Model-Resident component may sit inside a
system of any grade or none. Crucially, it confers no autonomy by itself---a
learned index bolted into a database revises nothing; a human made the allocation
once and a human maintains it. A large body of ML for systems optimized occupancy
of the one level where adaptation is undefined, while the ladder below stood
unclimbed.

\heading{S3: Self-Tuning.}
An AI process $\tau$ revises the policy $\theta$ within
  the family the implementation exposes ($\theta \to \theta'$, with $p$ and $D$
  fixed): autoscalers with fitted forecasters, knob tuners, self-tuning database
  configurations. Runtime ML is the degenerate case---$\tau$ is a training loop,
  the L3 object a weight vector, the L4 occupant the resulting forward pass. Its
  constitutive failure is \emph{saturation}: when $\Delta w$ outruns what
  $\Theta_p$ can express, no $\theta'$ recovers, and the degradation is silent,
  because ``is $\Theta_p$ still the right family?'' is an L2 question, one level
  above the system's authority. An S3 system cannot even represent its own
  inadequacy. This is the classical ceiling: essentially every adaptive system
  built before LLM coding agents is S3 or below.

\heading{S2: Self-Rewriting.} An AI process $\sigma$ synthesizes a new
  \emph{implementation} $p' \in P_D$ behind the same interfaces ($p \to p'$,
  design fixed): swap the threshold rule for a decision tree, the hash index for
  a range index, re-derive the eviction routine against fresh traces. This is the
  level that was out of reach before, because revising an implementation means
  writing code against a specification---exactly what agentic synthesis now
  does~\cite{adrs,alphaevolve}.

  Two clauses are constitutive at S2 that were
  optional at S3. \textbf{(i)~Escalation.} An \emph{escalation detector}
  $\varepsilon$ maps evidence of saturated L3 adaptation to an L2 regeneration
  trigger, turning ``my tuning has stopped working'' into a signal rather than an
  incident; it is the genuinely novel object at this level---a decision procedure
  over the adequacy of decision procedures. \textbf{(ii)~$\rho$-verification.}
  Every candidate revision passes a verification procedure $\rho$---validation
  suites, invariant checks, trace replay, adversarial probes---with a verified
  non-AI fallback retained, so the deployed artifact is
  $\text{fallback} \oplus p'$ and its worst case is the fallback's by
  construction. S2 without $\rho$ is not a maturity grade; it is an incident
  generator.

\heading{S1: Self-Architecting.}
An AI process revises the \emph{design}
  itself, including the convention
  $R$ that fixes what counts as a parameter versus code, subject only to a
  human-owned L0. Three properties make S1 qualitatively, not incrementally,
  harder: revising design changes what $\rho$ \emph{is} (self-referential
  validation---who checks the checker after it is rewritten?); authority over $R$
  lets the system \emph{re-stratify}, moving fast-drifting decisions to cheaper
  levels; and L1 revision is interface renegotiation with external stakeholders,
  so the practical ceiling is \emph{S1 within a negotiated envelope}, not
  unbounded self-design. This tier is aspirational; \S\ref{sec:disc} returns to
  its obstacle.

\heading{Weak and strong forms.} Each grade admits a \emph{weak} form, written
weak-Sk, in which the AI has the revision \emph{capability} but the trigger is
external---a human or a schedule invokes regeneration---and a \emph{strong} form,
Sk, that adds the autonomous trigger (for S2, the escalation detector
$\varepsilon$; for S1, an analogous detector over design adequacy). The
distinction is where the value lives: an organization running an agent that
rewrites heuristics \emph{on request} has a weak-S2 system---synthesis cost
has collapsed, but drift detection remains a human task.
The step from weak-S2 to strong-S2 automates the \emph{trigger}, not the synthesis, and that is
where most of the drift-robustness value sits.

Crucially, climbing this ladder does \emph{not} move intelligence onto the hot
path. The correct occupant of L4 is compiled code at every grade---$\sigma$'s
output is a \emph{program}, and programs are what run at L4. The AI works at
development time and between versions, at L2 and L3 frequency, where its cost is
affordable. This is the same control-plane/data-plane split that Maas et al.\
argue for on capability grounds~\cite{maas}: heavy reasoning authors code off the
critical path; cheap generated code executes on it.
Our contribution is not that architecture but a \emph{governance} account of it:
who may revise what, and under which guarantees,
which we define in the next section.
\section{AI-Native, Defined}
\label{sec:def}

An AI-native system is not a system that contains AI models. It is a system whose
\emph{life cycle}---the cycle that revises the system as the world
changes---has been handed, under bounds, to an AI process. We define it by
three factors, one per axis of \S\ref{sec:levels}, all bound by a human-owned
envelope:

\begin{definition*}[AI-Native System]
A system is \emph{AI-native} if and only if it satisfies all of the following conditions:
\begin{enumerate}
    \item \textbf{Strong-S2 autonomy.}
    An AI autonomously revises system implementations and, via the escalation
    detector $\varepsilon$, autonomously determines when such revisions are needed.

    \item \textbf{Allocation authority.}
    An AI controls the allocation map $\alpha$ across levels L2--L4. At each
    decision point, it determines which decider occupies that point and may
    reassign the decider as cost and workload change.

    \item \textbf{$\rho$-verified revision.}
    Every AI-generated revision is subject to the verification procedure
    $\rho$, and the system retains a verified non-AI fallback implementation.

    \item \textbf{Human-owned envelope.}
    Level L0, together with the objective and invariants of L1, remains under
    human ownership and constrains all AI authority in the levels below.
\end{enumerate}
\end{definition*}

Compactly: \textbf{AI-native $=$ strong-S2 $\times$ $\alpha$-authority $\times$
$\rho$-verification, under a human-owned envelope.} Each factor answers a distinct
need. \emph{Strong-S2} supplies the revision authority a resident model lacks,
and its escalation detector $\varepsilon$ is the direct remedy for silent drift: the system
turns ``my policy has stopped working'' into a signal instead of an incident.
\emph{$\alpha$-authority} keeps compiled code on the data path and lets the AI
place an expensive decider only where its frequency can amortize it.
\emph{$\rho$-verification} makes autonomous rewriting survivable: because the deployed
artifact is $\text{fallback} \oplus p'$, the worst case of the composite is that
of the verified fallback, given a bounded, reversible switch with no
cross-version state corruption---the runtime-assurance pattern of the Simplex
architecture~\cite{simplex}.
And the \emph{human-owned envelope} marks where
autonomy stops: the AI may rewrite how the system pursues its purpose, never what
its purpose is.

The three factors are exactly the three axes of the extended decision landscape
of \S\ref{sec:closure}---adaptive range on the vertical hierarchy, the allocation
axis $\alpha$, and the verification axis $\rho$---with the human residue pinned at
L0. That the definition factors cleanly along these axes is evidence it carves
the space at its joints rather than stipulating a checklist.

\heading{Grade membership.}
\label{sec:certificates}
Grade membership should be demonstrated by an artifact, not claimed. The
\emph{certificate} of a grade is the object whose existence and auditability
establishes the authority (Table~\ref{tab:certs}). The certificate of an
AI-native system is the S2 row: an auditable, verified code diff $p \to p'$
carrying its passing $\rho$ record, its $\varepsilon$ trigger evidence, and its
retained fallback---an artifact an ordinary code review can check. Autonomy you
cannot audit is not a grade.

\begin{table}[t]
\centering
\small
\setlength{\tabcolsep}{4pt}
\renewcommand{\arraystretch}{1.25}
\begin{tabular}{@{}p{0.07\columnwidth}p{0.55\columnwidth}p{0.24\columnwidth}@{}}
\toprule
\textbf{Grade} & \textbf{Certificate} & \textbf{Auditable by} \\
\midrule
S3 & Tuning log: $(\Delta w,\ \theta \to \theta',\ \text{outcome})$ triples & monitoring review \\
\midrule
S2 & Code diff $p \to p'$ with passing $\rho$ record, $\varepsilon$ trigger, retained fallback & ordinary code review \\
\midrule
S1 & Re-derived $D'$ \emph{and} $\rho'$ with an argument that $\rho'$ preserves the invariants & design review (human) \\
\bottomrule
\end{tabular}
\caption{Certificates by grade. The S1 row shows why it
is hard: the certificate must include a re-derived admission process a human can
still audit, which caps the admissible complexity of any redesign at human
comprehension.}
\label{tab:certs}
\end{table}

\heading{The definition at work.}
Table~\ref{tab:worked-def} runs representative systems through the definition.
The pattern to read off it: occupancy varies freely (models, code, agents appear
at every row) while the grade tracks only revision authority, and AI-native is
the single row where strong-S2, $\alpha$-authority, and $\rho$ hold together.

\begin{table}[t]
\centering
\footnotesize
\setlength{\tabcolsep}{4pt}
\renewcommand{\arraystretch}{1.2}
\begin{tabular}{@{}p{0.60\columnwidth}p{0.30\columnwidth}@{}}
\toprule
\textbf{System} & \textbf{Grade} \\
\midrule
B-tree with hand-tuned fill factor & none \\
Learned index in production DB & none (Model-Resident) \\
\midrule
Autoscaler with fitted forecaster & S3 \\
Online-retrained learned cache & S3 (Model-Resident) \\
\midrule
Agent re-derives eviction code on request  & weak-S2 \\
\quad + drift monitor triggers it, fallback retained & S2 \\
\quad + AI also (re)assigns per-point deciders & \textbf{AI-native} \\
\bottomrule
\end{tabular}
\caption{The definition applied. Autonomy climbs by automating the
\emph{trigger} and widening \emph{authority}, not by making the resident model
smarter.}
\label{tab:worked-def}
\end{table}

\heading{Compared with recent related efforts.}
This definition grades a different object than the recent capability taxonomies,
and is orthogonal to both. Maas et al.\ ask how much \emph{intelligence} a
problem demands; Feng et al.\ ask how capable an \emph{AI} is at systems
research~\cite{maas,feng}. We ask how a \emph{system} governs its own evolution.
The axes are independent: a system built entirely on narrow, ``low-capability''
ML can be AI-native if an agent rewrites its heuristics under $\rho$-verification,
while a one-off superhuman result from an algorithm-discovery run is \emph{not}
AI-native, because it holds no standing authority and closes no loop.
The paradigm shift AI-native names is in who holds revision authority, not in how smart the
resident model is.
\section{A Precedent: Driving Automation}
\label{sec:analogy}

The definition's shape is not unique to systems.
The SAE J3016 driving-automation levels are the best-known precedent for
grading autonomy not by intelligence but by \emph{which decisions the machine holds, within what
bounds, and who owns failure}~\cite{sae}.
A decade of automotive autonomy paid,
in engineering effort, for lessons the systems community can now read off rather than re-learn.

\subsection{The mapping}

\begin{table}[t]
\centering
\small
\setlength{\tabcolsep}{4pt}
\renewcommand{\arraystretch}{1.2}
\begin{tabular}{@{}cp{0.52\columnwidth}p{0.20\columnwidth}@{}}
\toprule
\textbf{SAE} & \textbf{Systems analog} & \textbf{Grade} \\
\midrule
L0--L1 & Humans revise everything; at most a tiny hand-tuned $\Theta$ & none \\
L2 & Runtime ML bolted in: a learned component on the hot path, a human babysits drift and rollback & Model-Resident ($+$S3) \\
L3 & Agent-maintained heuristics: L3 re-tunes autonomously, structural drift triggers L2 regeneration or handoff & weak-S2 $\to$ S2 \\
L4 & Autonomous rewriting within a human-owned spec, verified fallback owning the worst case & S2 / AI-native \\
L5 & AI authority over the design $D$ itself & S1 (aspirational) \\
\bottomrule
\end{tabular}
\caption{The SAE driving-automation levels mapped onto the revision-authority
grades. The mapping is exact only after Lesson~4 relocates the comparison from
the vehicle to the fleet loop.}
\label{tab:sae}
\end{table}

Three correspondences beneath Table~\ref{tab:sae} are structural, not
decorative.

\heading{The ODD is the human-owned envelope.} An SAE L4 vehicle is \emph{fully}
autonomous only \emph{within} an operational design domain (ODD) that humans
define---geography, weather, road class. The systems counterpart is exact: L0
purpose plus the objective and invariants of L1 bound the region in which the
agent's revision authority is total and outside which it is zero. In both
domains, ``conditional'' is not a hedge on the definition; it is what makes the
definition operable.

\heading{The minimal-risk condition is the verified fallback.} What separates
SAE L3 from L4 is not driving skill but \emph{fallback ownership}: an L4 vehicle
must reach a safe state with no human available. That is precisely the
$\rho$-verification clause---deployed artifact $=$ fallback $\oplus$ optimized
policy, so the composite's worst case is the verified fallback's by
construction.

\heading{The passenger still picks the destination.} Even a hypothetical L5
vehicle does not choose where you are going. Destination selection dominates the
driving task in the sense of $\succ$---it defines what the drive \emph{is}---and
stays human at every level. This is L0, independently rediscovered: full
self-driving never meant the car chooses your errands, and AI-native must never
mean the system chooses its own SLA.

\subsection{Lessons borrowed}

\heading{Lesson 1: the dangerous level is the middle one.} The industry's
hardest-won finding is that SAE L2 is the \emph{worst} configuration: enough
autonomy to invite trust, not enough to own failure, with a human ``monitor''
whose vigilance decays exactly as the automation's competence grows. The
translation is uncomfortable and precise---runtime ML in production \emph{is} the
L2 configuration. A learned scheduler handles the common case well enough that the
human stops watching, degrades silently under drift, and returns control (via an
incident) at the moment of maximum difficulty.

\heading{Lesson 2: progress came from the envelope, not the model.} The jump
from demos to deployed L4 was achieved by ODD specification, validation
infrastructure, scenario coverage, and fallback engineering---not by better
driving networks alone. The capability was necessary; the \emph{envelope} made
it shippable. This is the thesis of our argument arriving from an independent
direction: the path to AI-native runs through $\rho$ and the specification, not
through smarter models on the hot path.

\heading{Lesson 3: the handoff problem is the escalation detector.} SAE L3's
notorious difficulty is the handback: the vehicle must know, with lead time, that
it is leaving its competence. This is exactly $\varepsilon$, converting ``my L3
adaptation has saturated'' into a trigger \emph{before} silent degradation
becomes an SLA violation. Self-assessment is the hard capability, harder than the
competence it assesses. In practice, L3's human handback proved so awkward that
L4---machine-owned fallback---became the more tractable target.
The analog: aim for verified-fallback composition rather than human-in-the-loop
handback under time pressure.

\heading{Lesson 4: autonomy lives in the loop, not the artifact.} The deployed
vehicle is, in our vocabulary, Model-Resident: it executes a frozen policy and
revises nothing. What made the leading programs work is the \emph{fleet loop
around} it---disengagement and telemetry (drift observation), offline
re-derivation (regeneration), simulation suites (validation), gated
over-the-air deployment (admission, i.e.\ $\rho$). That loop is an S2 system
whose artifact happens to be a driving stack. The lesson generalizes:
\textbf{autonomy is a property of the maintenance loop, not of the artifact}, and
asking ``is this system AI-native?'' of a binary is as confused as asking it of a
single parked car. The grade attaches to the system \emph{plus its loop}.

\section{Open Problems and a Call to Build}
\label{sec:disc}

Strong-S2 is close, and the obstacles are guardrails, not model quality. The
capability to synthesize an implementation already exists; what stands between it
and an AI-native system is the machinery that makes standing revision authority
safe. Four problems are load-bearing.

\begin{itemize}
\item \textbf{Escalation detection ($\varepsilon$).} Knowing when you don't know
  is the hard, under-studied capability. Detecting a saturated policy family
  before it becomes an SLA violation is the single highest-leverage piece of an
  AI-native system, and it is harder than the competence it guards: it is a
  decision procedure over the adequacy of decision procedures. The automotive
  handback problem (\S\ref{sec:analogy}) is the same problem in another field,
  and it was the one that gated deployment there too.
\item \textbf{Verification and fallback composition ($\rho$).} Trace-replay
  suites, adversarial workload probes, invariant checks, and provably-retained
  fallbacks are what make autonomous rewriting production-ready. Systems hold a
  real advantage here: they admit cheap, reliable verifiers and simulators
  against which a candidate can be scored without
  hallucination~\cite{maas}, which makes $\rho$ tractable in a way automotive
  simulation never fully was.
\item \textbf{Re-stratification.} Choosing the representation $R$ so the
  fastest-drifting decisions live at the lowest levels---where revision is
  cheapest and $\rho$ is lightest---may be the deepest sense of ``AI-native
  architecture.'' It is design work that pays off only once the system is
  expected to revise itself, which is why it has no real precedent to borrow.
\item \textbf{The threat surface.} An AI with standing authority to rewrite and
  deploy L2 code is itself an attack surface: a crafted workload can steer
  synthesis or trip $\varepsilon$ adversarially, and a compromised agent holds
  deploy authority. Guardrails here are a prerequisite, not a refinement.
\end{itemize}

The frontier beyond strong-S2 is S1, and its difficulty is self-referential:
revising the design changes what the levels below \emph{are}, including the
verification procedure $\rho$, whose suites are written against the design's
interfaces. Who checks the checker after the checker is rewritten? An S1
certificate must include a re-derived $\rho'$ a human can still audit, which caps
the admissible complexity of any redesign at human comprehension. So the
practical ceiling is S1 within a negotiated envelope, not unbounded self-design.

None of these is a reason to wait. Each is a concrete, buildable
artifact---a detector, a verifier, a representation, a threat model---and the
capability they would govern is already in the field. The definition names what
we are building toward; these are the guardrails that make building it safe.
\section{Related Work}
\label{sec:related}

\heading{Taxonomies of ML for systems.}
Prior efforts to organize the AI-for-systems space classify \emph{techniques
and their fit to problems}. Maas's taxonomy~\cite{maas2020taxonomy} provides
the canonical vocabulary for deciding whether and how ML applies to a systems
problem, and comprehensive surveys organize the resulting literature by domain
and learning paradigm~\cite{kanakis2022survey, wu2023survey}. In our
vocabulary, these works catalog \emph{occupancy}: which decision points a
learned model may reside at, and with which method.

\heading{Capability ladders for AI\@.}
A second lineage grades the \emph{AI} rather than the system. Morris et
al.~\cite{morris2024levels} import the spirit of the SAE driving-automation
levels~\cite{sae}.
Further, Maas et al.~\cite{maas} bring this ladder to systems, introducing a human-machine
baseline and arguing that fully solving classic system policies is
``AGI-complete.''
Mitchell et al.~\cite{mitchell2025fully} grade agent autonomy
by escalating authority over program flow, topping out at agents that rewrite
their own code---precisely what our S2 grade regulates, with escalation
detection and $\rho$-verification.
These ladders grade what an AI \emph{can do}; ours grades what a deployed system-plus-loop \emph{is permitted to
revise}, and evidences the claim with certificates.

\heading{Autonomy in specific system domains.}
Domain communities have independently built autonomy ladders for
\emph{operating} systems without granting them authority over their own
implementation. Autonomic computing introduced the MAPE-K loop and the
self-$\ast$ properties under human-supplied high-level
objectives~\cite{kephart2003vision}, a lineage the self-adaptive-systems
community has since systematized~\cite{weyns2020introduction}. Self-driving
databases~\cite{pavlo2017selfdriving, vanaken2017automatic} and self-driving
networks~\cite{feamster2018networks} pursue closed-loop operational autonomy,
while the 3GPP/TM~Forum autonomous-network levels~\cite{tgpp2023ts28100} grade
autonomy per task category---execution, analysis, decision, intent---with
intent held human longest. These frameworks reach at most our S3
(parameter-level adaptation within a fixed implementation); their per-category
grading and human-held intent prefigure, respectively, our decision-level
stratification and the human-owned L0.

\heading{AI-driven systems research.}
Most recently, coding agents have been applied to systems research itself.
ADRS~\cite{adrs, cheng2025letbarbarians} and
AlphaEvolve~\cite{alphaevolve} demonstrate LLM-driven discovery of
algorithms that outperform expert baselines, treating systems as white boxes
whose code the AI may rewrite offline; Feng et al.~\cite{feng} propose the SOAR
dimensions and five levels of ``system intelligence'' keyed to
PhD-student personas, and argue benchmarks are the binding constraint. These
efforts grade the \emph{research agent's} capability; the revision authority
they exercise lives in an offline loop with a human trigger---our weak-S2. Our
framework is complementary: it attaches the grade to the deployed artifact and
its maintenance loop, makes the escalation detector ($\varepsilon$) and the
verification procedure ($\rho$) constitutive, and pins purpose at a human-owned
L0---turning
their demonstrated capability into a checkable property of a system.
\section{Conclusion}

``AI-native'' has been a slogan in search of a referent; we have given it one.
AI is already writing systems code and moving into the control plane, and what
has been missing is a definition of the \emph{system property} this enables:
bounded authority---verified before each change takes effect---for an AI to
revise the system's own decisions under a human-owned purpose. We named it,
graded it, and located the guardrails that make its building safe. The
capability arrived ahead of the discipline; the definition is the first step.

\bibliographystyle{ACM-Reference-Format}
\bibliography{references}
 
\end{document}